\documentclass{article}

\usepackage[dblblindworkshop, final]{neurips_2026}

\usepackage[utf8]{inputenc}
\usepackage[T1]{fontenc}
\usepackage{url}
\usepackage{booktabs}
\usepackage{amsfonts}
\usepackage{nicefrac}
\usepackage{microtype}
\usepackage{xcolor}
\usepackage{graphicx}

\usepackage{amsmath}
\usepackage{amssymb}
\usepackage{mathtools}
\usepackage{amsthm}
\usepackage{subcaption}
\usepackage{multirow}
\usepackage{algorithm}
\usepackage{algorithmic}

\usepackage{wrapfig}
\usepackage{listings}
\usepackage{hyperref}

\theoremstyle{plain}
\newtheorem{theorem}{Theorem}[section]
\newtheorem{proposition}[theorem]{Proposition}

\theoremstyle{definition}

\theoremstyle{remark}
\newtheorem{remark}[theorem]{Remark}

\title{Belief-Conditioned Multi-Agent Actor Critic for Decentralized LLM
Collaboration}

\workshoptitle{PTA: From Pretrained Representations to Acting Agents}

\author{%
  Tianle Chen \qquad Shuo Liu \qquad Christopher Amato\thanks{Corresponding author.} \\
  Khoury College of Computer Sciences, Northeastern University, Boston, MA, 02215, USA \\
  \texttt{\{chen.tianle, liu.shuo2, c.amato\}@northeastern.edu} \\
}

\begin{document}

\maketitle

\begin{abstract}
  Recent work has explored optimizing LLM collaboration through Multi-Agent Reinforcement Learning (MARL), but many approaches still rely on predefined workflows, limiting efficiency and scalability. Decentralized LLM collaboration is challenging because agents execute independently with limited communication, share a joint objective, and must infer useful information from communication traces. In this paper, we propose \textbf{BCMAAC}, a \textbf{B}elief-\textbf{C}onditioned \textbf{M}ulti-\textbf{A}gent \textbf{A}ctor-\textbf{C}ritic method that maintains beliefs over contextual information and uses them to facilitate training. Experiments on Minecraft game-playing, web searching and tool-using tasks show that BCMAAC improves task performance and collaborative behavior compared to naive actor-critic baselines. The improvements are particularly pronounced in multi-turn tasks with richer contextual information, where belief modeling provides a clear advantage.
\end{abstract}

\section{Introduction}

Recent advances in LLMs have motivated researchers to use them as intelligent agents and interact with environment in multiple turns \cite{yao2022react}. Many works use reinforcement learning with verifiable rewards (RLVR), where rewards are derived from verifiable signals such as rules and unit tests~\citep{lambert2024tulu3,guo2025deepseekr1,kimiteam2025kimi,liu2025magrpo,guo2025deepseekr1}. Compare with reinforcement learning from human feedback (RLHF) that trains a reward model using human preference data, RLVR typically has lower cost, greater scalability, and more objective feedback in scenarios where task success can be automatically verified~\citep{lambert2024tulu3,guo2025deepseekr1}.

Existing works have employed multi-LLM collaboration to improve performance. Prompt-based approaches for multi-agent systems have demonstrated improved problem-solving through role assignment and agent-to-agent communications~\citep{du2024debate,qian2024chatdev,wu2023autogen}. However, they often follow a manually designed interaction protocols. Recent work has combined reinforcement learning approaches with LLM collaboration. For instance, CoLLM-CC~\citep{liu2026collm} is a multi-agent actor-critic (MAAC) methods that utilize centralized value estimation while keeping decentralized execution.

Despite the advantages of MAAC in LLM collaboration, it faces many challenges. Given the partial observability of the task, the policy is a pre-trained LLM, and actions are generated as sequences of natural language tokens. Action selection relies not only on task reasoning but also on inferences about the environment and the other agents' internal states, thereby increasing the difficulty of decision-making and optimization. Furthermore, the problem of long-horizon sparse rewards is another challenge, making it difficult to capture effective learning signals in a timely manner. This manifests as rewards that are typically delayed, sparse, and concentrated in the terminal state, making it hard for the model to determine which intermediate communications, inferences, or the environmental interactions indeed contributed to the final success.

We study decentralized LLM collaboration optimized with RLVR, focusing on environmental state estimation and the other agents' intention inference. We propose BCMAAC, a belief-conditioned multi-agent actor-critic method. BCMAAC uses a recurrent belief encoder to maintain latent belief context from structured side information and conditions this context onto each LLM actor; during the training phase, a centralized critic aggregates the multi-agent belief context to estimate the value, while the execution phase remains decentralized, with each agent relying solely on its own history and local belief context for decision-making.

Our contributions can be summarized as follows. 
(i) We propose BCMAAC. BCMAAC learns execution-time belief context from structured side information and uses it for both actor conditioning and centralized value estimation. 
(ii) We show that explicit belief context is particularly useful for long-horizon, partially observable collaboration, where useful actions depend on persistent hidden state, the specific information of other agents, and delayed or sparse feedback. This gives agents an intermediate representation for deciding what to do and communicate across turns. 
(iii) Experiments on collaboration tasks show that BCMAAC outperforms MAAC. And the ablation show that both task-belief and partner-action auxiliary signals help improve performance.

\section{Related Works}

\paragraph{LLM-based multi-agent collaboration.}
Recent work uses multiple LLM agents to solve tasks that are difficult for a single model. Multi-agent debate improves the answer by independent reasoning and iterative refinement~\citep{du2024debate}. ChatDev organizes role-specialized agents along a structured software-development pipeline~\citep{qian2024chatdev}. AutoGen provides a general framework in which conversable agents cooperate with tools and optional human input~\citep{wu2023autogen}. CAMEL relies on role-playing between agents, which cooperate through prompting~\citep{li2023camel}. These systems demonstrate the benefits of communication and task decomposition. However, their collaboration strategies are largely specified by prompts, leaving the collaboration pattern fixed and unable to improve with task experience.

\paragraph{RL for LLM collaboration.}

Besides prompt engineering, recent studies also train multi-LLM systems. MAGRPO jointly trains cooperating LLM agents on joint rewards without a critic~\citep{liu2025magrpo}. Dr.~MAS investigates stability issues arising in GRPO-style critic-free training of multi-agent LLM systems. They improve performance by normalizing each agent's advantage with its own reward statistics~\citep{feng2026drmas}. MAPoRL trains multiple LLMs with verifier-based rewards to encourage collaborative behavior~\citep{park2025maporl}. MARFT gives a general formulation for reinforcement fine-tuning of LLM multi-agent systems~\citep{liao2025marft}. Most closely related is MAAC, which formulates LLM collaboration as cooperative MARL. It adopts centralized training with decentralized execution, with a centralized critic and shared rewards, for long-horizon collaboration~\citep{kraemer2016ctde,liu2026collm}. Unlike prompt-defined systems, these methods allow collaboration policies to adapt to task feedback. Nevertheless, they usually use raw interaction histories as context. The actor and critic must reconstruct task state and the knowledge of other agents from long textual histories, making representation learning difficult.

\paragraph{Partner modeling and belief learning in MARL.}

Early methods condition an agent's policy on observed opponent behavior or infer another agent's hidden goal~\citep{he2016deepopp,raileanu2018som}. Later work learns more general representations of agents' latent characteristics from their behavioral histories~\citep{rabinowitz2018tomnet,papoudakis2021liam}. Also some research improves agent collaboration by learned communication~\citep{foerster2016dial,sukhbaatar2016commnet,das2019tarmac}. These methods usually assume low-dimensional observations, compact policies, or continuous communication. In contrast, LLM agents act over long textual histories and communication, making belief and partner modeling more challenging.

\section{Background}

\subsection{Decentralized LLM Collaboration}

In decentralized LLM collaboration, given a task, each agent is provided with a natural language prompt. The process can be described as a multi-turn interaction. Each agent maintains its own context locally and generates a response based on its own policy. The responses of all agents are aggregated within the environment into a joint action, after which the environment updates the task state based on this joint action and returns new observations, feedback, and rewards. When generating actions, each agent rely on its own local observations, historical context, received messages, and tools to which it has access. Agents can exchange information through the communication channels. The communication itself is part of the action. And decentralized execution supports parallel reasoning and reduces context pressure on individual agents.

\subsection{LLM Dec-POMDP Formulation}

We formulate decentralized LLM collaboration as a cooperative Dec-POMDP~\citep{oliehoek2016decpomdp,liu2025magrpo}, denoted by $\mathcal{M}=\langle \mathcal{I},\mathcal{V},C,M,\mathcal{S},\{\mathcal{O}_i\}_{i\in\mathcal{I}},\{\mathcal{A}_i\}_{i\in\mathcal{I}},R,T,H,\gamma\rangle$. Here, $\mathcal{I}=\{1,\ldots,n\}$ is the set of $n$ LLM agents, $\mathcal{V}$ is the token vocabulary, $C$ is the context-window length, $M$ is the maximum response length, $H$ is the episode horizon, and $\gamma\in[0,1]$ is the discount factor. The global state space is $\mathcal{S}=\mathcal{S}^{\mathrm{sys}}\times\mathcal{S}^{\mathrm{usr}}$, where $\mathcal{S}^{\mathrm{sys}}$ denotes the system-accessible state and $\mathcal{S}^{\mathrm{usr}}$ the user-hidden state. At turn $t$, the global state is $s_t=(s_t^{\mathrm{sys}},s_t^{\mathrm{usr}})\in\mathcal{S}$.

Each agent $i$ has a local observation space $\mathcal{O}_i$, with joint observation space $\mathcal{O}=\times_{i\in\mathcal{I}}\mathcal{O}_i$. At turn $t$, agent $i$ receives a local observation $o_{i,t}\in\mathcal{O}_i$, typically represented as a natural-language prompt, and the joint observation is $\mathbf{o}_t=(o_{i,t})_{i\in\mathcal{I}}\in\mathcal{O}$. Each agent also has an action space $\mathcal{A}_i$, with joint action space $\mathcal{A}=\times_{i\in\mathcal{I}}\mathcal{A}_i$. An action is a generated response sequence $a_{i,t}=(a_{i,t}^{1},a_{i,t}^{2},\ldots,a_{i,t}^{m_{i,t}})$, where $m_{i,t}\le M$ and thus $\mathcal{A}_i\subseteq\mathcal{V}^{\le M}$. The joint action is $\mathbf{a}_t=(a_{i,t})_{i\in\mathcal{I}}\in\mathcal{A}$. The environment evolves according to $T:\mathcal{S}\times\mathcal{A}\rightarrow\Delta(\mathcal{S})$, with $s_{t+1}\sim T(\cdot\mid s_t,\mathbf{a}_t)$. The joint reward is given by $R:\mathcal{S}^{\mathrm{sys}}\times\mathcal{A}\rightarrow\mathbb{R}$, with $r_t=R(s_t^{\mathrm{sys}},\mathbf{a}_t)$.

Agent $i$ acts from its local history $h_{i,t}=(o_{i,0},a_{i,0},o_{i,1},a_{i,1},\ldots,a_{i,t-1},o_{i,t})$, represented within a context window of at most $C$ tokens. The joint history is $\mathbf{h}_t=(h_{i,t})_{i\in\mathcal{I}}$. Each policy maps a local history to a distribution over response sequences, $\pi_{\theta_i}(a_{i,t}\mid h_{i,t})$, and the joint policy is $\boldsymbol{\pi}_{\boldsymbol{\theta}}=\{\pi_{\theta_i}\}_{i\in\mathcal{I}}$. Under decentralized execution, $\boldsymbol{\pi}_{\boldsymbol{\theta}}(\mathbf{a}_t\mid\mathbf{h}_t)=\prod_{i\in\mathcal{I}}\pi_{\theta_i}(a_{i,t}\mid h_{i,t})$. At the token level, $\pi_{\theta_i}(a_{i,t}\mid h_{i,t})=\prod_{\mu=1}^{m_{i,t}}\pi_{\theta_i}(a_{i,t}^{\mu}\mid h_{i,t},a_{i,t}^{<\mu})$.

The objective is to maximize the discounted expected return over the episode, $J(\boldsymbol{\theta})=\mathbb{E}_{\boldsymbol{\pi}_{\boldsymbol{\theta}}}[\sum_{t=0}^{H-1}\gamma^t r_t]$, yielding the optimal joint policy $\boldsymbol{\pi}^{*}=\arg\max_{\boldsymbol{\pi}}\mathbb{E}_{\boldsymbol{\pi}}[\sum_{t=0}^{H-1}\gamma^t r_t]$. An episode trajectory is $\tau=(\mathbf{o}_0,\mathbf{a}_0,r_0,\mathbf{o}_1,\mathbf{a}_1,r_1,\ldots,\mathbf{o}_{H-1},\mathbf{a}_{H-1},r_{H-1},\mathbf{o}_H)$. The discounted return from turn $t$ is $G_t=\sum_{\ell=t}^{H-1}\gamma^{\ell-t}r_\ell$, and the corresponding joint value on the joint history is $V^{\boldsymbol{\pi}}(\mathbf{h}_t)=\mathbb{E}_{\boldsymbol{\pi}}[\sum_{\ell=t}^{H-1}\gamma^{\ell-t}r_\ell\mid\mathbf{h}_t]$.

\subsection{MAAC}

In centralized-training decentralized-execution MAAC~\citep{liu2026collm}, each actor depends only on its local history at execution time, $\pi_{\theta_i}(a_{i,t}\mid h_{i,t})$. During training, a centralized critic estimates the team value. We write $V_{\phi}(\mathbf{h}_t)
=\mathbb{E}_{\boldsymbol{\pi}}[G_t\mid \mathbf{h}_t]$ to indicate that the critic is conditioned on the joint history.
The corresponding advantage estimate is $\hat A_t=r_t+\gamma V_{\phi}(\mathbf h_{t+1})-V_{\phi}(\mathbf h_t)$, and the actor objective uses the log-probability of the complete response sequence,

\begin{equation}
\mathcal{L}_{\pi}^{\mathrm{MAAC}}
=-\sum_t\sum_{i\in\mathcal{I}}\hat A_t
\log\pi_{\theta_i}(a_{i,t}\mid h_{i,t}).
\end{equation}

MAAC reduces the variance of Monte Carlo returns by bootstrapping from a learned value function and propagates terminal rewards back to intermediate turns. However, MAAC remains limited by the difficulty of extracting task-relevant state from long histories. It also lacks explicit modeling of the knowledge of other agents and future actions.

\section{Belief-Conditioned MAAC}
\label{sec:bcmaac}

\subsection{BCMAAC Formulation}

Each episode samples a task $\xi\sim\mathcal{D}$, which determines the environment $\mathrm{Env}$. Besides the language history $h_{i,t}$, each agent receives structured side information $x_{i,t}\in\mathcal{X}_i$ that can be constructed at execution time; the joint side information is $\mathbf x_t=(x_{i,t})_{i\in\mathcal I}$. The key constraint is that $x_{i,t}$ only depends on information available to agent $i$ \emph{at execution time}, including received communication. BCMAAC maintains a latent belief through a recurrent belief
encoder $f_\psi$, yielding
\begin{equation}
(c_{i,t},z_{i,t+1},\hat{\mathbf b}_{i,t},\hat{\mathbf p}_{i,t})
=f_{\psi}(x_{i,t},z_{i,t}).
\end{equation}
Here, $z_{i,t}$ is the recurrent state carried across turns, $c_{i,t}$ is the belief context passed to the actor and critic. And $\hat{\mathbf b}_{i,t}$ is a task belief prediction, and $\hat{\mathbf p}_{i,t}$ is a partner action prediction.

The BCMAAC actor is conditioned on local history and local belief context, $\pi_{\theta_i}(a_{i,t}\mid h_{i,t}, c_{i,t})$. For an LLM actor, the action remains a response sequence. The predictions $\hat{\mathbf b}_{i,t}$ and $\hat{\mathbf p}_{i,t}$ are train-time auxiliary heads, supervised by per-turn targets $\mathbf b^*_{i,t}$ and $\mathbf p^*_{i,t}$. These targets are produced by a labeling rule $\Gamma_{\xi}$. For $\Gamma_{\xi}$, it reads the completed trajectory $\tau$ together with the environment's internal state, and writes down the ground truth that the agents could not observe. Such privileged information is used only during training.

During training, the centralized critic aggregates all agent contexts. Let $\mathbf c_t=\mathrm{Joint}(\{c_{i,t}\}_{i\in\mathcal{I}}).$ $\mathrm{Joint}(\cdot)$ may be implemented by concatenation with fixed order. The critic estimates the joint value $V_{\phi}(\mathbf c_t) = \mathbb{E}_{\boldsymbol{\pi}}[G_t\mid \mathbf c_t].$

The BCMAAC advantage is $\hat A_t=r_t+\gamma V_{\phi}(\mathbf c_{t+1})-V_{\phi}(\mathbf c_t)$, and the actor and critic losses are
\begin{align}
\mathcal{L}_{\pi}
&=-\sum_t\sum_{i\in\mathcal{I}}
\hat A_t\log\pi_{\theta_i}(a_{i,t}\mid h_{i,t}, c_{i,t}),\\
\mathcal{L}_{V}
&=\sum_t\left(V_{\phi}(\mathbf c_t)-r_t-\gamma V_{\phi}(\mathbf c_{t+1})\right)^2.
\end{align}

\subsection{Theoretical Properties}

Let $\mathcal{F}_t$ denote the information available before sampling the joint action at turn $t$. The joint context $\mathbf{c}_t$ is computed as a compressed representation of the execution information in $\mathcal{F}_t$. Then $\mathbf{c}_t$ is $\mathcal{F}_t$-measurable.

Remark~\ref{rem:unbiased} shows that replacing $G_t$ with $G_t-\beta(\mathbf c_t)$ does not change the expected value of the policy gradient estimate; the form is entirely analogous to the score-function baseline identity ~\citep{williams1992reinforce}.

\begin{remark}
\label{rem:unbiased}
Let $\mathbf c_t$ be fixed before sampling the current joint action $\mathbf a_t$, and let $\beta(\mathbf c_t)$ be independent of the current action. Then replacing $G_t$ with $G_t-\beta(\mathbf c_t)$ does not change the expectation of the policy-gradient estimator:
\begin{align*}
&\mathbb{E}_{\boldsymbol{\pi}}\left[
\sum_{i\in\mathcal{I}}\nabla_{\theta_i}
\log\pi_{\theta_i}(a_{i,t}\mid h_{i,t}, c_{i,t})
\left(G_t-\beta(\mathbf c_t)\right)\right]\\
&\quad =
\mathbb{E}_{\boldsymbol{\pi}}\left[
\sum_{i\in\mathcal{I}}\nabla_{\theta_i}
\log\pi_{\theta_i}(a_{i,t}\mid h_{i,t}, c_{i,t})G_t\right].
\end{align*}
\end{remark}

\begin{proof}
Conditioning on $\mathcal{F}_t$, $\beta(\mathbf c_t)$ is pre-action and does not depend on $a_{i,t}$. Since the joint policy factorizes, marginalizing over $\mathbf a_{-i,t}$ reduces each term to a single-agent identity. Using
$\pi_{\theta_i}\nabla_{\theta_i}\log\pi_{\theta_i}=\nabla_{\theta_i}\pi_{\theta_i}$,
\begin{align*}
&\mathbb{E}_{a_{i,t}\sim\pi_{\theta_i}}\left[
\nabla_{\theta_i}\log\pi_{\theta_i}(a_{i,t}\mid h_{i,t}, c_{i,t})
\beta(\mathbf c_t)\mid\mathcal{F}_t\right]\\
&\quad =
\beta(\mathbf c_t)\nabla_{\theta_i}
\sum_{a_{i,t}}\pi_{\theta_i}(a_{i,t}\mid h_{i,t}, c_{i,t})=0.
\end{align*}
Therefore the baseline term contributes zero in expectation. By linearity and then taking expectation over $\mathcal{F}_t$, the two policy-gradient estimators have the same expectation.
\end{proof}

Proposition~\ref{prop:sufficiency} shows the ideal case: when the learned context is sufficient for future return, it loses
no value-relevant information compared to the full-history context (proof in Appendix~\ref{app:proofs}).

\begin{proposition}
\label{prop:sufficiency}
If the joint belief context $\mathbf c_t$ is sufficient for future return in the sense that
\begin{equation*}
\mathbb{E}_{\boldsymbol{\pi}}[G_t\mid \mathcal{F}_t,\mathbf c_t]
=\mathbb{E}_{\boldsymbol{\pi}}[G_t\mid \mathbf c_t],
\end{equation*}
then the value conditioned on all available information equals the value conditioned on context: $V^{\boldsymbol{\pi}}(\mathcal{F}_t)=V^{\boldsymbol{\pi}}(\mathbf c_t).$

\end{proposition}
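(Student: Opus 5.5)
The argument is short and rests on the measurability fact stated just before Remark~\ref{rem:unbiased}: $\mathbf c_t$ is $\mathcal F_t$-measurable. I would first fix what the notation means. Set $V^{\boldsymbol\pi}(\mathcal F_t):=\mathbb E_{\boldsymbol\pi}[G_t\mid\mathcal F_t]$, the value given all pre-action information. Set $V^{\boldsymbol\pi}(\mathbf c_t):=\mathbb E_{\boldsymbol\pi}[G_t\mid\mathbf c_t]$, i.e.\ conditioning on $\sigma(\mathbf c_t)$. Both are conditional expectations under the trajectory distribution induced by $\boldsymbol\pi$. I would assume $G_t$ is integrable, which holds because $H$ is finite and rewards are bounded or at least integrable.

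The first step is to identify the left-hand side of the hypothesis. Because $\mathbf c_t$ is $\mathcal F_t$-measurable, $\sigma(\mathbf c_t)\subseteq\mathcal F_t$. Hence the $\sigma$-algebra generated jointly by $\mathcal F_t$ and $\mathbf c_t$ is just $\mathcal F_t$, and
\begin{equation*}
\mathbb E_{\boldsymbol\pi}[G_t\mid\mathcal F_t,\mathbf c_t]=\mathbb E_{\boldsymbol\pi}[G_t\mid\mathcal F_t]=V^{\boldsymbol\pi}(\mathcal F_t)
\end{equation*}
almost surely. The second step substitutes this into the sufficiency hypothesis, which gives $V^{\boldsymbol\pi}(\mathcal F_t)=\mathbb E_{\boldsymbol\pi}[G_t\mid\mathbf c_t]=V^{\boldsymbol\pi}(\mathbf c_t)$ almost surely. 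That is exactly the claim.

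As a consistency remark, I would note that the tower property gives
\begin{equation*}
\mathbb E[V^{\boldsymbol\pi}(\mathcal F_t)\mid\mathbf c_t]=V^{\boldsymbol\pi}(\mathbf c_t)
\end{equation*}
in general. Sufficiency upgrades this to equality before the outer conditioning, meaning the full-information value is itself a function of $\mathbf c_t$. This also connects to the critic: the minimizer of squared error over functions of $\mathbf c_t$ then coincides with the full-information value, so nothing value-relevant is lost.

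The step needing the most care is the first one, and it is a matter of rigor rather than difficulty. One must be explicit that $\mathbf c_t$ is a deterministic function of execution-time quantities available before $\mathbf a_t$ is sampled: the $x_{i,t}$, the recurrent states $z_{i,t}$, and the fixed encoder parameters $\psi$. Without that, conditioning on $(\mathcal F_t,\mathbf c_t)$ would not collapse to conditioning on $\mathcal F_t$. It should also be stated that all equalities hold almost surely, as is standard for conditional expectations.
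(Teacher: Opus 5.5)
Your proof is correct and follows the paper's argument: both use the $\mathcal{F}_t$-measurability of $\mathbf c_t$ to collapse $\mathbb{E}_{\boldsymbol{\pi}}[G_t\mid\mathcal{F}_t,\mathbf c_t]$ to $\mathbb{E}_{\boldsymbol{\pi}}[G_t\mid\mathcal{F}_t]$, and then substitute the sufficiency hypothesis. Your explicit treatment of almost-sure equality and of the integrability of $G_t$ adds rigor that the paper leaves implicit.
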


Remark~\ref{rem:aux} notes that the auxiliary losses supply the belief encoder with a per-turn representation signal, which is especially useful when task rewards are sparse.

\begin{remark}
\label{rem:aux}
In sparse-reward tasks where task reward is concentrated near the end of the episode, return-driven representation learning depends on discounted returns. In BCMAAC, the belief encoder also receives gradients from
\[
\mathcal{L}_{\mathrm{aux}}
=\sum_t\sum_{i\in\mathcal{I}}\big[\lambda_b\ell_b(\hat{\mathbf b}_{i,t},\mathbf b^*_{i,t})
+\lambda_p\ell_p(\hat{\mathbf p}_{i,t},\mathbf p^*_{i,t})\big],
\]
where the labels $\mathbf b^*_{i,t}$ and $\mathbf p^*_{i,t}$ are generated from the trajectory by the labeling rule $\Gamma_{\xi}(\tau)$. These auxiliary gradients compare predictions directly with per-turn labels. Thus, whenever such labels are available, the belief encoder receives an additional per-turn representation signal, indicating that BCMAAC supplies an additional source of representation learning for state estimation and the modeling of agents.
\end{remark}

\subsection{BCMAAC Algorithm}

\begin{wrapfigure}{r}{0.55\textwidth}
  \centering
  \vspace{-6mm}
  \includegraphics[width=0.9\linewidth]{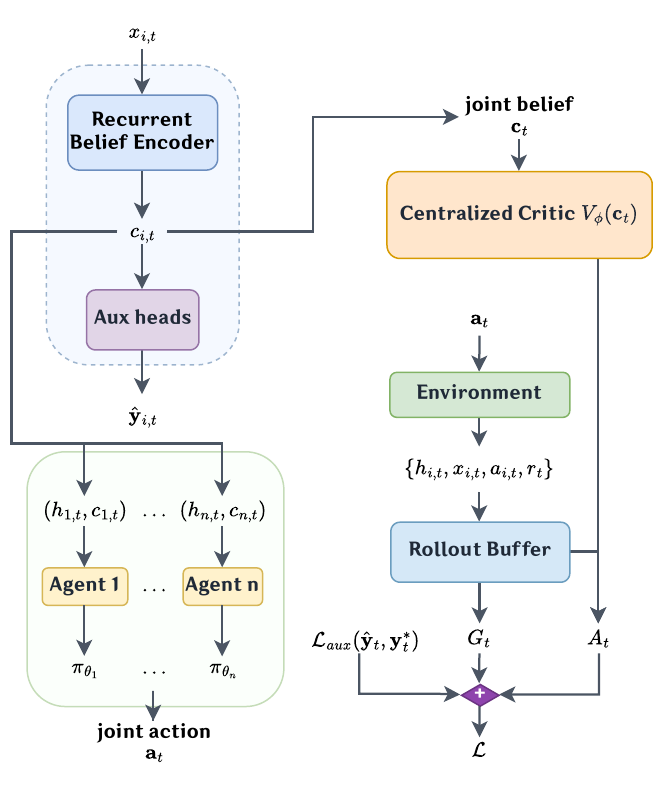}
  \vspace{-4mm}
  \caption{Illustration of the BCMAAC framework.\vspace{-8mm}}
  \label{fig:bcmaac-frame}
\end{wrapfigure}

Figure~\ref{fig:bcmaac-frame} and Algorithm~\ref{alg:bcmaac} summarize the full training procedure, which has three components: trajectory collection, auxiliary label generation, and actor-critic updates. After each rollout, one-step TD targets are computed, and auxiliary labels are generated by $\{\mathbf y^*_{i,t}\}_{i,t}\leftarrow\Gamma_{\xi}(\tau)$, where $\mathbf y^*_{i,t}=(\mathbf b^*_{i,t},\mathbf p^*_{i,t})$ collects the task-belief and partner-action targets that supervise the auxiliary predictions $\hat{\mathbf y}_{i,t}=(\hat{\mathbf b}_{i,t},\hat{\mathbf p}_{i,t})$.

For auxiliary supervision, $\ell_b$ is a masked BCE loss for task-belief prediction and $\ell_p$ a cross-entropy loss for partner-action prediction, with the mask $\boldsymbol{\omega}_{i,t}$ excluding unlabeled task-belief entries. Summing over agents and turns gives $\mathcal{L}_b=\sum_t\sum_{i\in\mathcal{I}}\ell_b(\hat{\mathbf b}_{i,t},\mathbf b^*_{i,t})$ and $\mathcal{L}_p=\sum_t\sum_{i\in\mathcal{I}}\ell_p(\hat{\mathbf p}_{i,t},\mathbf p^*_{i,t})$, and the auxiliary objective is $\mathcal{L}_{\mathrm{aux}}=\lambda_b\mathcal{L}_b+\lambda_p\mathcal{L}_p$.

During training, contexts are recomputed from the stored structured side information, $(c^b_{i,t},z^b_{i,t+1},\hat{\mathbf b}^b_{i,t},\hat{\mathbf p}^b_{i,t})\leftarrow f_{\psi}(x^b_{i,t},z^b_{i,t})$, and $\mathbf c^b_t=\mathrm{Joint}(\{c^b_{i,t}\}_{i\in\mathcal{I}})$. Given a rollout buffer $\mathcal B$, BCMAAC computes one-step TD advantages. The critic is optimized by TD regression, the actors by advantage-weighted response log-probabilities, and the belief encoder by the combined value, policy, and auxiliary losses. Thus, with a value-loss weight $\lambda_V$, $\phi$ is updated only by the value loss, $\theta_i$ by the policy loss, and $\psi$ by $\lambda_V\mathcal L_V+\mathcal L_{\mathrm{aux}}+\mathcal L_\pi$.

During execution, the centralized critic and auxiliary heads are not required. Each agent recursively computes $(c_{i,t},z_{i,t+1})\leftarrow f_{\psi}(x_{i,t},z_{i,t})$ and samples $a_{i,t}\sim\pi_{\theta_i}(\cdot\mid h_{i,t}, c_{i,t})$. Since $x_{i,t}$ is constructed only from agent $i$'s execution-time information, $c_{i,t}$ is local as well. Thus BCMAAC preserves decentralized execution: $\boldsymbol{\pi}_{\boldsymbol{\theta}}(\mathbf a_t\mid \mathbf h_t,\mathbf c_t)
=\prod_{i\in\mathcal{I}}\pi_{\theta_i}(a_{i,t}\mid h_{i,t}, c_{i,t}).$

\begin{algorithm}[t]
\caption{BCMAAC}
\label{alg:bcmaac}
\begin{algorithmic}[1]
\STATE {\bfseries Input:} Task distribution $\mathcal{D}$; initial parameters $(\boldsymbol{\theta},\psi,\phi)$;
\FOR{epoch $e=1,\ldots,E$}
    \STATE Initialize rollout buffer $\mathcal{B}\leftarrow\emptyset$
    \FOR{rollout $k=1,\ldots,K_{\mathrm{roll}}$}
        \STATE Sample $\xi\sim\mathcal{D}$; reset $\xi$; initialize $h_{i,0},x_{i,0},z_{i,0}$ and trajectory $\tau\leftarrow\emptyset$
        \FOR{turn $t=0,\ldots,H-1$}
            \STATE Receive current prompt $o_{i,t}$ and structured observation $x_{i,t}$; update local history $h_{i,t}$
            \STATE $(c_{i,t},z_{i,t+1},\hat{\mathbf b}_{i,t},\hat{\mathbf p}_{i,t})\leftarrow f_{\psi}(x_{i,t},z_{i,t})$, $\forall i$
            \STATE Sample $a_{i,t}\sim\pi_{\theta_i}(\cdot\mid h_{i,t},c_{i,t})$, $\forall i$; execute $\mathbf a_t$ in $\mathrm{Env}$
            \STATE Observe $(\mathbf o_{t+1},\mathbf x_{t+1},r_t)$; append transition to $\tau$
        \ENDFOR
        \STATE Compute one-step TD targets; generate labels $\{(\mathbf b^*_{i,t},\mathbf p^*_{i,t})\}_{i,t}\leftarrow\Gamma_{\xi}(\tau)$; add $\tau$ to $\mathcal{B}$
    \ENDFOR
    \STATE Initialize losses $\bar{\mathcal{L}}_V,\bar{\mathcal{L}}_\pi,\bar{\mathcal{L}}_{\mathrm{aux}}\leftarrow0$
    \FOR{trajectory $\tau^b\in\mathcal{B}$}
        \STATE Reset recurrent states $z^b_{i,0}\leftarrow0$, $\forall i$
        \FOR{transition $t$ in $\tau^b$}
            \STATE Recompute local contexts and auxiliary predictions via $f_\psi$, $\forall i$
            \STATE Form joint context $\mathbf c^b_t$ and advantage $\hat A^b_t$
            \STATE Accumulate value, policy, and auxiliary losses

        \ENDFOR
    \ENDFOR
    \STATE Update critic $\phi\leftarrow\phi-\alpha_V\nabla_\phi(\lambda_V\bar{\mathcal{L}}_V)$
    \STATE Update actors $\theta_i\leftarrow\theta_i-\alpha_\pi\nabla_{\theta_i}\bar{\mathcal{L}}_\pi$, $\forall i\in\mathcal{I}$
    \STATE Update encoder $\psi\leftarrow\psi-\alpha_\psi\nabla_\psi(\lambda_V\bar{\mathcal{L}}_V+\bar{\mathcal{L}}_\pi+\bar{\mathcal{L}}_{\mathrm{aux}})$
\ENDFOR
\STATE {\bfseries Output:} Belief encoder $f_{\psi}$ and joint policy $\boldsymbol{\pi}_{\boldsymbol{\theta}}$
\end{algorithmic}
\end{algorithm}
\vspace{-4mm}

\begin{table}[t]
\centering
\footnotesize
\setlength{\tabcolsep}{4pt}
\caption{Evaluation results across Minecraft, Web, and Tool collaboration. AC denotes single-agent actor-critic method; Parallel and Pipeline represents 2 test-time prompt-based multi-agent collaboration methods; MAAC is the centralized-critic multi-agent actor-critic baseline; BCMAAC-P and BCMAAC-B are variants that only partner-action and task-belief auxiliary beliefs, respectively. Success is the success rate and match score is the normalized answer-matching score, both are reported as percentages and averaged over evaluation runs. Standard deviations are reported in subscripts.\vspace{2mm}}
\begin{tabular}{lcccccc}
\toprule
 & \multicolumn{2}{c}{\textbf{Minecraft}}
 & \multicolumn{2}{c}{\textbf{Web}}
 & \multicolumn{2}{c}{\textbf{Tool}} \\
\cmidrule(lr){2-3}
\cmidrule(lr){4-5}
\cmidrule(lr){6-7}

\multirow{3}{*}{\textbf{Method}}
 & Resource & Bridge & Cross-Lingual & Multi-Source & Stateful & Filesystem \\
 & Gathering & Building & Gathering & Gathering & Tool & Wrangling \\
 & Success (\%) & Success (\%) & Match Score (\%) & Match Score (\%)
 & Success (\%) & Success (\%) \\
\midrule

Raw Model
 & \(0.0_{\pm 0.0}\)
 & \(0.0_{\pm 0.0}\)
 & \(45.0_{\pm 0.0}\)
 & \(11.3_{\pm 0.0}\)
 & \(33.3_{\pm 0.0}\)
 & \(53.8_{\pm 0.0}\) \\

AC
 & \(70.1_{\pm 5.7}\)
 & \(30.1_{\pm 7.2}\)
 & \(51.9_{\pm 3.4}\)
 & \(19.7_{\pm 3.8}\)
 & \(55.8_{\pm 7.5}\)
 & \(66.9_{\pm 8.0}\) \\

\midrule

Parallel
 & \(0.0_{\pm 0.0}\)
 & \(0.0_{\pm 0.0}\)
 & \(37.3_{\pm 1.0}\)
 & \(16.9_{\pm 2.7}\)
 & \(33.0_{\pm 5.6}\)
 & \(25.3_{\pm 0.0}\) \\

Pipeline
 & \(0.0_{\pm 0.0}\)
 & \(0.0_{\pm 0.0}\)
 & \(35.8_{\pm 1.6}\)
 & \(18.0_{\pm 0.0}\)
 & \(26.7_{\pm 0.0}\)
 & \(4.6_{\pm 4.2}\) \\

\midrule

MAAC
 & \(78.4_{\pm 8.9}\)
 & \(80.6_{\pm 10.3}\)
 & \(45.1_{\pm 3.9}\)
 & \(47.6_{\pm 2.8}\)
 & \(59.2_{\pm 6.1}\)
 & \(58.9_{\pm 8.0}\) \\

BCMAAC-P
 & \(83.7_{\pm 5.3}\)
 & \(89.3_{\pm 9.8}\)
 & \(50.9_{\pm 0.9}\)
 & \(49.4_{\pm 3.5}\)
 & \(64.5_{\pm 11.2}\)
 & \(64.6_{\pm 2.6}\) \\

BCMAAC-B
 & \(88.9_{\pm 8.5}\)
 & \(84.8_{\pm 13.6}\)
 & \(51.6_{\pm 6.5}\)
 & \(53.3_{\pm 6.4}\)
 & \(70.4_{\pm 6.3}\)
 & \(71.2_{\pm 5.0}\) \\

BCMAAC
 & \(\mathbf{92.8}_{\pm 3.7}\)
 & \(\mathbf{94.4}_{\pm 9.6}\)
 & \(\mathbf{53.2}_{\pm 5.1}\)
 & \(\mathbf{58.5}_{\pm 2.3}\)
 & \(\mathbf{77.5}_{\pm 6.4}\)
 & \(\mathbf{73.0}_{\pm 1.9}\) \\

\bottomrule
\end{tabular}
\label{tab:main-eval}
\end{table}

\section{Experiments}
\label{sec:experiments}

\subsection{Setup}
\label{subsec:exp-setup}

\paragraph{Minecraft Collaboration.}
Minecraft games (Figure~\ref{fig:minecraft-envs}) are long-horizon testbeds with explicit physical state and partial observability. Each agent observes only its local view and received messages, where the agents maintain environment beliefs, track the other agents' progress, avoid redundant exploration, and coordinate actions through communication. 
In \textit{Bridge Building}, two agents start on opposite sides of a damaged gap and should construct a safe connected path within limited turns. The gap contains visually ambiguous pillars that may either provide useful support or indicate hazardous regions, where nearby construction incurs large penalties. Because pillar types are initially hidden, agents must explore informative viewpoints, communicate discoveries, and coordinate bridge segments from both sides.
In \textit{Resource Gathering}, resource cells contain different amounts of wood, stone, and iron, and agents must satisfy all collection targets within a fixed horizon. Agents have asymmetric extraction capabilities, so they may observe resources that only the other agents' can collect. Effective collaboration therefore requires separating actionable observations from information that should be communicated. At each turn, agents specify broadcast resource facts, movement paths, and extraction targets, and extraction is only allowed near the final location.

\paragraph{Web Collaboration.}

\begin{wrapfigure}{R}{0.5\textwidth}
  \centering
  \vspace{-8mm}
  \begin{subfigure}[b]{0.46\linewidth}\centering\includegraphics[width=\textwidth]{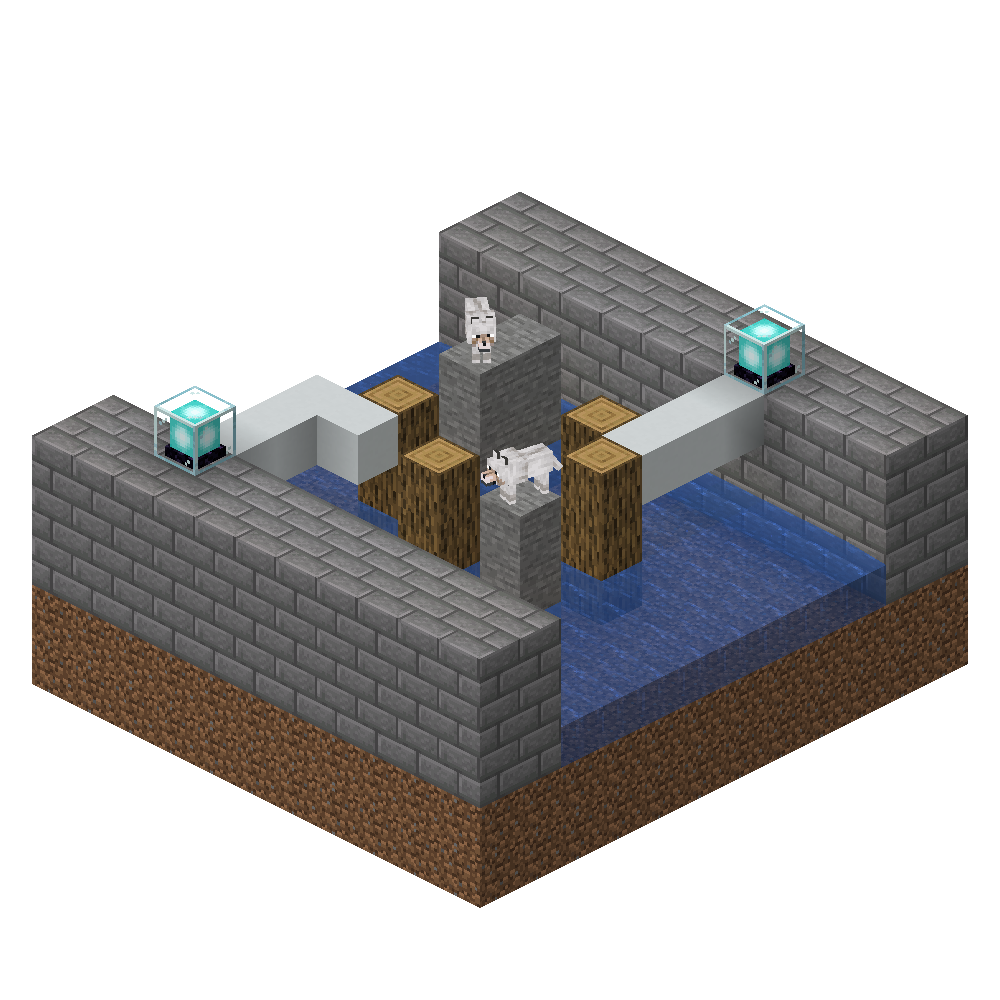}\caption{\textit{Bridge Building}}\label{subfig:env-bridge}\end{subfigure}
  \hfill
  \begin{subfigure}[b]{0.46\linewidth}\centering\includegraphics[width=\textwidth]{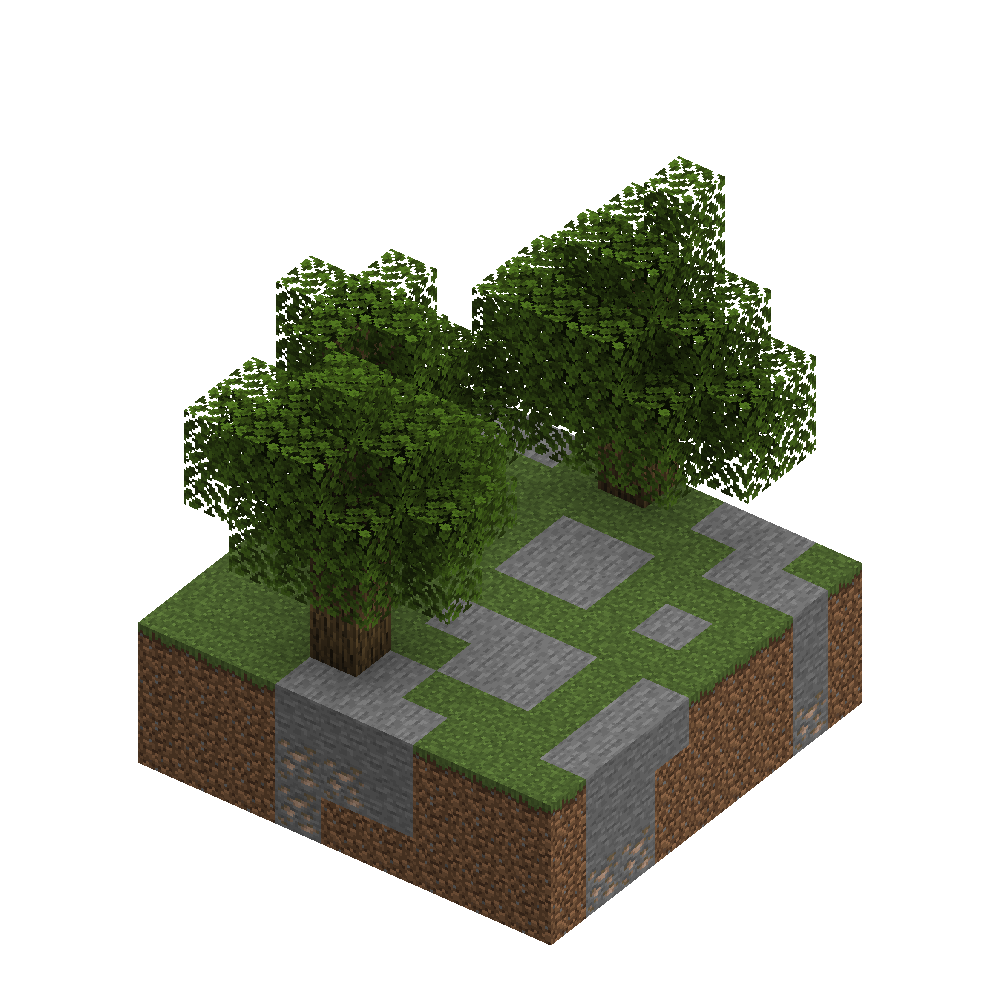}\caption{\textit{Resource Gathering}}\label{subfig:env-rg}\end{subfigure}
  \caption{Minecraft tasks. (a) In \textit{Bridge Building}, agents start at the blue beacon endpoints and must construct a white-block bridge under limited vision. Brown wooden pillars are safe, whereas gray stone pillars with wolves are dangerous decoys. (b) In \textit{Resource Gathering}, agents with complementary tools observe only local regions and  collect from trees, stone, and minerals of iron.\vspace{-1mm}}
  \label{fig:minecraft-envs}
\end{wrapfigure}
In Web Collaboration, agents operate in an open-ended environment. The relevant state is not a location or block layout, but the progress of the search process: what evidence each agent has collected, which sources still look promising, whether the available evidence is enough to support an answer, and what the other agent are likely to do next. As the dominant reward is given after the final answer is checked, agents receive little direct feedback while searching. They should decide whether to continue gathering evidence, share partial findings, wait for more information from the other agents, or commit to an answer.
In \textit{Cross-Lingual Gathering}, agents gather cross-lingual information sources~\citep{longpre2021mkqa} and produce a final answer. Agents should bridge language and phrasing differences, convert locally retrieved information into evidence useful to the other agents, and decide whether evidence from different languages supports the same answer. Simple parallel search may improve coverage, but without modeling what the other agent have found, agents can still duplicate search effort or submit incomplete answers too early, leading to inefficiencies in the collaboration process.
In \textit{Multi-Source Gathering}, agents aggregate evidence from multiple sources~\citep{chen2020hybridqa}. A single source may provide only partial evidence, multiple sources can be redundant or conflicting, and the final answer depends on combining evidence across sources. Effective collaboration requires agents to know not only what they have found, but also what the other agents have covered, what remains missing, and whether the agents are ready to answer.

\paragraph{Tool Collaboration.}

The tool-using task, one agent inspect the environment, while the other can modify it. Information acquisition and execution must occur in separate turns. The inspecting agent must retrieve and communicate the necessary facts, after which the acting agent can execute.
In \textit{Stateful Tool}, we adapt Apple ToolSandbox~\citep{lu2025toolsandbox} into a two-agent setting, where a phone environment contains contacts, messages, reminders, and device settings. Two agents are conducting tasks such as retrieving the latest message, disabling cellular data, or rescheduling a reminder, are provided in natural language. The state inspector has read-only access, and the tool executor can modify the environment, send messages, and submit the final answer. Arguments such as phone numbers, contact IDs, and timestamps can be retrieved by the inspector and communicated in a prior turn before the executor use them. Each agent produces a structured JSON action specifying broadcast facts, information requests, the intended next step, a readiness indicator, at most one tool call, and an optional final answer.
\textit{Filesystem Wrangling} is adapted from the multi-turn \texttt{GorillaFileSystem} task in BFCL V3~\citep{patil2025bfcl}. It uses a deterministic in-memory filesystem that can be regarded as a miniature MCP server accessed by two clients with restricted permissions. The explorer has read-only tools, including \texttt{ls}, \texttt{find}, \texttt{cat}, and \texttt{grep}, while the editor has write tools such as \texttt{mkdir}, \texttt{touch}, \texttt{echo}, \texttt{mv}, \texttt{cp}, and \texttt{rm}. Since the editor cannot inspect the filesystem, the explorer must communicate all relevant paths. Hard operations such as \texttt{mv}, \texttt{cp}, \texttt{rm}, and \texttt{rmdir} are permitted only when the source path has been found in a previous turn; also, paths discovered in the current turn cannot be consumed immediately by the editor.

\begin{figure}[t]
\centering
\begin{subfigure}[b]{0.32\textwidth}\centering\includegraphics[width=\textwidth]{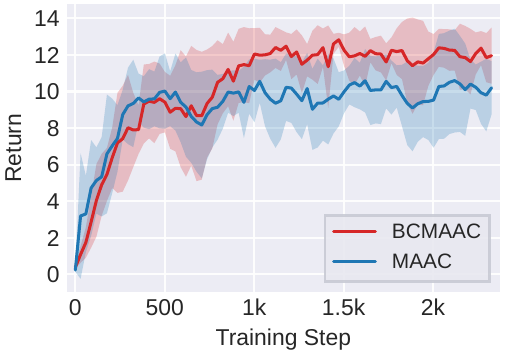}\caption{Minecraft $\mid$ \textit{Resource Gathering}}\label{subfig:rg}\end{subfigure}\hfill
\begin{subfigure}[b]{0.32\textwidth}\centering\includegraphics[width=\textwidth]{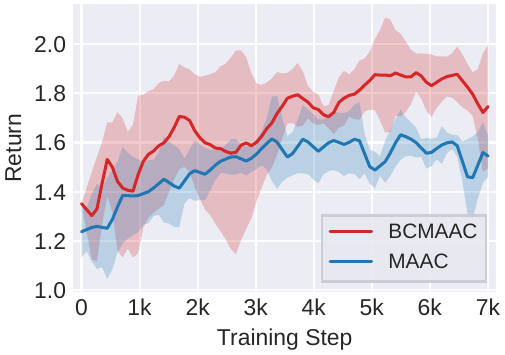}\caption{Web $\mid$ \textit{Cross-Lingual Gathering}}\label{subfig:clg}\end{subfigure}\hfill
\begin{subfigure}[b]{0.32\textwidth}\centering\includegraphics[width=\textwidth]{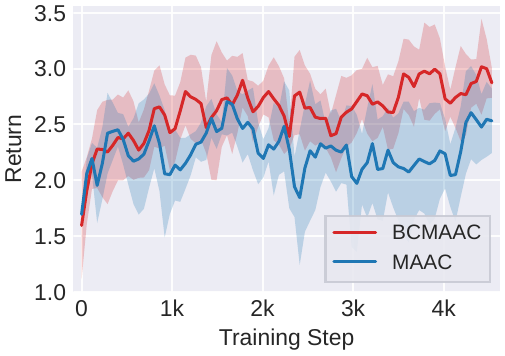}\caption{Tool $\mid$ \textit{Stateful Tool}}\label{subfig:st}\end{subfigure}

\begin{subfigure}[b]{0.32\textwidth}\centering\includegraphics[width=\textwidth]{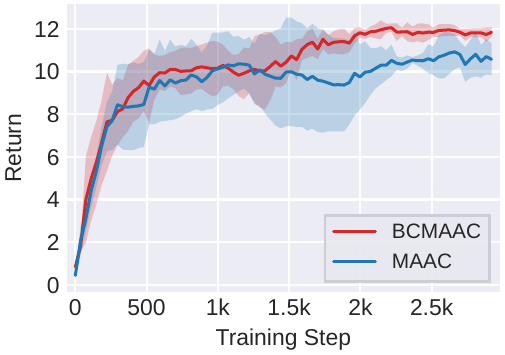}\caption{Minecraft $\mid$ \textit{Bridge Building}}\label{subfig:bb}\end{subfigure}\hfill
\begin{subfigure}[b]{0.32\textwidth}\centering\includegraphics[width=\textwidth]{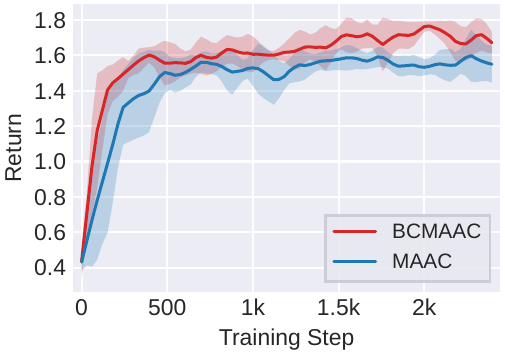}\caption{Web $\mid$ \textit{Multi-Source Gathering}}\label{subfig:msg}\end{subfigure}\hfill
\begin{subfigure}[b]{0.32\textwidth}\centering\includegraphics[width=\textwidth]{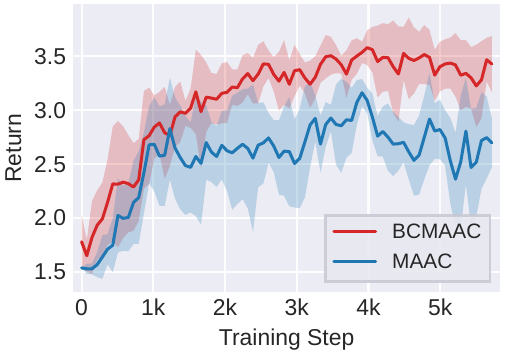}
\caption{Tool $\mid$ \textit{Filesystem Wrangling}}\label{subfig:fw}\end{subfigure} \caption{A comparison between BCMAAC (red) and MAAC (blue) across the six tasks. The y-axis denotes the expected return, and the x-axis denotes the environment steps. Results are all averaged over five runs, and shaded regions indicate the standard deviation. Curves are smoothed using a time-weighted exponential moving average.}
\label{fig:main-results}
\end{figure}

\subsection{Results}
\label{subsec:main-results}

Table \ref{tab:main-eval} reports the main evaluation results. \textbf{BCMAAC achieves the best performance on all six tasks.} Trained multi-agent methods substantially outperform single-agent AC on tasks that can be effectively decomposed across agents, particularly \textit{Resource Gathering}, \textit{Bridge Building}, \textit{Multi-Source Gathering}, and \textit{Stateful Tool}, indicating that these tasks are not merely tests of individual model capabilities, but require task decomposition, complementary information, and cross-agent collaboration. In contrast, collaboration based solely on prompts (Parallel, Pipeline) fails to consistently improve performance, \textbf{indicating that simply adding agents without learning effective collaboration methods is insufficient}. For standard deviations, BCMAAC exhibits comparable or lower variability than MAAC overall, indicating that it can improve stability; BCMAAC-P and BCMAAC-B show higher standard deviations than full BCMAAC on many tasks, indicating that BCMAAC with two auxiliary objectives not only improve average performance but provide a complementary regularization effect.

Tasks with low parallelism are an exception, the most typical being \textit{Filesystem Wrangling}: the Editor can only act on paths that the Explorer has already found and communicated in the previous turn, making the two roles more akin to a sequential pipeline than a parallel division of labor. Consequently, a single-agent AC (66.9\%) that handles both reading and writing and eliminates the need for cross-agent handoffs is inherently strong, even outperforming the decentralized MAAC (58.9\%). Even so, BCMAAC can still track the Editor's readiness status, outperforming the single-agent approach with 73.0\%, indicating that what truly matters is effective collaboration.

Compared to MAAC, BCMAAC's largest gain occurs in \textit{Stateful Tool} (59.2\%$\to$77.5\%, $+18.3$), followed by \textit{Resource Gathering} (78.4\%$\to$92.8\%) and \textit{Filesystem Wrangling} (58.9\%$\to$73.0\%). Both \textit{Stateful Tool} and \textit{Resource Gathering} require maintenance of hidden state, the former must track which facts the executor still lacks and when to submit them, while the latter must remember resource locations and coordinate with agents of different capabilities. This is exactly the setting where belief context helps. While MAAC already performs strongly on \textit{Bridge Building}, yet BCMAAC still improves from 80.6\% to 94.4\%, indicating that structured belief remains beneficial for collaboration quality. On Web tasks, BCMAAC also consistently outperforms MAAC. \textit{Cross-Lingual Gathering} shows moderate gains (45.1\%$\to$53.2\%), while \textit{Multi-Source Gathering} is even greater (47.6\%$\to$58.5\%), consistent with the task structure, as multi-source question-answering relies more on complementary evidence.

\textbf{Our ablation studies demonstrate that the two auxiliary signals are complementary.} BCMAAC-P and BCMAAC-B perform between MAAC and the full BCMAAC on all tasks, indicating that both partner-action prediction and task-belief supervision can improve upon the original MAAC critic. BCMAAC-B performs better on \textit{Resource Gathering}, \textit{Cross-Lingual Gathering}, \textit{Multi-Source Gathering}, and the two Tool tasks, all of which center on maintaining the state of resources/evidence/tools and the other agents' readiness; BCMAAC-P performs better on \textit{Bridge Building}, where coordinating the other agents' actions and dividing construction tasks is particularly important. The full BCMAAC achieves the best overall performance, demonstrating that combining the modeling of task beliefs with partner actions yields the best collaboration.

\subsection{Analysis}\label{subsec}

Figure~\ref{fig:main-results} shows the performance on six tasks. Overall, \textbf{BCMAAC converges faster, achieves higher rewards, and is more stable than MAAC on most tasks,} as the structured belief context improves advantage estimation.

In \textit{Filesystem Wrangling}, BCMAAC overtakes MAAC and maintains a markedly higher, less volatile reward. This is consistent with better tracking of Editor readiness and more accurate timing of hard actions. In \textit{Resource Gathering}, BCMAAC likewise improves more smoothly, suggesting that the belief encoder captures resource states, remaining objectives, and the other agents' capabilities useful for intermediate decisions. In \textit{Bridge Building} and \textit{Stateful Tool}, the methods are initially similar, but BCMAAC converges to a higher final reward. Gains are smaller on the Web tasks: BCMAAC maintains a modest advantage in \textit{Multi-Source Gathering}, while \textit{Cross-Lingual Gathering} remains noisy and shows improvement at end. The task-belief and partner-action objectives help track retrieved evidence and answer readiness, reducing redundant searches and premature responses.

Results suggest that BCMAAC benefits from latent collaboration state explicitly rather than reconstructing it solely from raw histories. Its centralized critic estimates returns from structured joint beliefs, while the auxiliary objectives provide dense turn-level supervision when communication and exploration produces little immediate reward.

In nearly fully observable tasks, or when short histories are sufficient, the advantage of BCMAAC may be limited. Its effectiveness depends on whether the reduction in estimation error outweighs the information lost through compression. In the partially observable, long-horizon tasks studied in this paper, this trade-off falls clearly on BCMAAC's side.

\section{Conclusion}

We study decentralized LLM collaboration and introduce BCMAAC. BCMAAC maintains structured belief context during execution, performs centralized value estimation from joint belief context during training, and preserves decentralized decision-making at inference time. BCMAAC consistently outperforms MAAC and improves training stability across Minecraft, Web, and tool-based collaboration tasks. The gains are strongest when agents must track latent state, integrate the information of other agents, coordinate delayed actions, and combine evidence from multiple sources. These results establish structured belief conditioning as an effective mechanism for decentralized LLM collaboration.

Our study has a few limitations. Although our framework is applicable to multiple agents, all six tasks are two-agent tasks, as the characteristics of the tasks we designed are best suited for pairing the two agents into complementary roles. For example, in \textit{Filesystem Wrangling}, the toolset splits cleanly into read-only commands and write commands, so an explorer with read-only permissions and an editor with write permissions can be paired accordingly.


\bibliographystyle{plainnat}
\bibliography{main}

\begin{thebibliography}{29}
\providecommand{\natexlab}[1]{#1}
\providecommand{\url}[1]{\texttt{#1}}
\expandafter\ifx\csname urlstyle\endcsname\relax
  \providecommand{\doi}[1]{doi: #1}\else
  \providecommand{\doi}{doi: \begingroup \urlstyle{rm}\Url}\fi

\bibitem[Chen et~al.(2020)Chen, Zha, Chen, Xiong, Wang, and Wang]{chen2020hybridqa}
Wenhu Chen, Hanwen Zha, Zhiyu Chen, Wenhan Xiong, Hong Wang, and William~Yang Wang.
\newblock {HybridQA}: A dataset of multi-hop question answering over tabular and textual data.
\newblock In \emph{Findings of the Association for Computational Linguistics: EMNLP 2020}, pages 1026--1036, Online, 2020. Association for Computational Linguistics.
\newblock \doi{10.18653/v1/2020.findings-emnlp.91}.
\newblock URL \url{https://aclanthology.org/2020.findings-emnlp.91/}.

\bibitem[Das et~al.(2019)Das, Gervet, Romoff, Batra, Parikh, Rabbat, and Pineau]{das2019tarmac}
Abhishek Das, Th{\'e}ophile Gervet, Joshua Romoff, Dhruv Batra, Devi Parikh, Mike Rabbat, and Joelle Pineau.
\newblock {TarMAC}: Targeted multi-agent communication.
\newblock In \emph{Proceedings of the 36th International Conference on Machine Learning}, volume~97 of \emph{Proceedings of Machine Learning Research}, pages 1538--1546. PMLR, 2019.

\bibitem[Du et~al.(2024)Du, Li, Torralba, Tenenbaum, and Mordatch]{du2024debate}
Yilun Du, Shuang Li, Antonio Torralba, Joshua~B. Tenenbaum, and Igor Mordatch.
\newblock Improving factuality and reasoning in language models through multiagent debate.
\newblock In \emph{Proceedings of the 41st International Conference on Machine Learning}, volume 235 of \emph{Proceedings of Machine Learning Research}, pages 11733--11763. PMLR, 2024.
\newblock URL \url{https://proceedings.mlr.press/v235/du24e.html}.

\bibitem[Feng et~al.(2026)Feng, Zheng, He, Zhang, and An]{feng2026drmas}
Lang Feng, Longtao Zheng, Shuo He, Fuxiang Zhang, and Bo~An.
\newblock {Dr. MAS}: Stable reinforcement learning for multi-agent {LLM} systems.
\newblock \emph{arXiv preprint arXiv:2602.08847}, 2026.
\newblock \doi{10.48550/arXiv.2602.08847}.
\newblock URL \url{https://arxiv.org/abs/2602.08847}.

\bibitem[Foerster et~al.(2016)Foerster, Assael, de~Freitas, and Whiteson]{foerster2016dial}
Jakob~N. Foerster, Ioannis~Alexandros Assael, Nando de~Freitas, and Shimon Whiteson.
\newblock Learning to communicate with deep multi-agent reinforcement learning.
\newblock In \emph{Advances in Neural Information Processing Systems}, volume~29, 2016.

\bibitem[Guo et~al.(2025)Guo, Yang, Zhang, et~al.]{guo2025deepseekr1}
Daya Guo, Dejian Yang, Haowei Zhang, et~al.
\newblock {DeepSeek-R1} incentivizes reasoning in {LLMs} through reinforcement learning.
\newblock \emph{Nature}, 645:\penalty0 633--638, 2025.
\newblock \doi{10.1038/s41586-025-09422-z}.

\bibitem[He et~al.(2016)He, Boyd-Graber, Kwok, and Daum{\'e}]{he2016deepopp}
He~He, Jordan Boyd-Graber, Kevin Kwok, and Hal Daum{\'e}, III.
\newblock Opponent modeling in deep reinforcement learning.
\newblock In \emph{Proceedings of the 33rd International Conference on Machine Learning}, volume~48 of \emph{Proceedings of Machine Learning Research}, pages 1804--1813. PMLR, 2016.

\bibitem[{Kimi Team} et~al.(2025){Kimi Team}, Du, Gao, et~al.]{kimiteam2025kimi}
{Kimi Team}, Angang Du, Bofei Gao, et~al.
\newblock {Kimi k1.5}: Scaling reinforcement learning with {LLMs}.
\newblock \emph{arXiv preprint arXiv:2501.12599}, 2025.
\newblock \doi{10.48550/arXiv.2501.12599}.
\newblock URL \url{https://arxiv.org/abs/2501.12599}.

\bibitem[Kraemer and Banerjee(2016)]{kraemer2016ctde}
Landon Kraemer and Bikramjit Banerjee.
\newblock Multi-agent reinforcement learning as a rehearsal for decentralized planning.
\newblock \emph{Neurocomputing}, 190:\penalty0 82--94, 2016.
\newblock \doi{10.1016/j.neucom.2016.01.031}.

\bibitem[Lambert et~al.(2024)Lambert, Morrison, Pyatkin, et~al.]{lambert2024tulu3}
Nathan Lambert, Jacob Morrison, Valentina Pyatkin, et~al.
\newblock Tulu 3: Pushing frontiers in open language model post-training.
\newblock \emph{arXiv preprint arXiv:2411.15124}, 2024.
\newblock \doi{10.48550/arXiv.2411.15124}.
\newblock URL \url{https://arxiv.org/abs/2411.15124}.

\bibitem[Li et~al.(2023)Li, Hammoud, Itani, Khizbullin, and Ghanem]{li2023camel}
Guohao Li, Hasan Abed Al~Kader Hammoud, Hani Itani, Dmitrii Khizbullin, and Bernard Ghanem.
\newblock {CAMEL}: Communicative agents for ``mind'' exploration of large language model society.
\newblock In \emph{Advances in Neural Information Processing Systems}, volume~36, pages 51991--52008, 2023.
\newblock Main Conference Track.

\bibitem[Liao et~al.(2025)Liao, Wen, Wang, and Zhang]{liao2025marft}
Junwei Liao, Muning Wen, Jun Wang, and Weinan Zhang.
\newblock {MARFT}: Multi-agent reinforcement fine-tuning.
\newblock \emph{arXiv preprint arXiv:2504.16129}, 2025.
\newblock \doi{10.48550/arXiv.2504.16129}.
\newblock URL \url{https://arxiv.org/abs/2504.16129}.

\bibitem[Liu et~al.(2025)Liu, Chen, Liang, Lyu, and Amato]{liu2025magrpo}
Shuo Liu, Tianle Chen, Zeyu Liang, Xueguang Lyu, and Christopher Amato.
\newblock {LLM} collaboration with multi-agent reinforcement learning.
\newblock \emph{arXiv preprint arXiv:2508.04652}, 2025.
\newblock \doi{10.48550/arXiv.2508.04652}.
\newblock URL \url{https://arxiv.org/abs/2508.04652}.

\bibitem[Liu et~al.(2026)Liu, Chen, Amiri, and Amato]{liu2026collm}
Shuo Liu, Tianle Chen, Ryan Amiri, and Christopher Amato.
\newblock Learning decentralized {LLM} collaboration with multi-agent actor critic.
\newblock \emph{arXiv preprint arXiv:2601.21972}, 2026.
\newblock \doi{10.48550/arXiv.2601.21972}.
\newblock URL \url{https://arxiv.org/abs/2601.21972}.

\bibitem[Longpre et~al.(2021)Longpre, Lu, and Daiber]{longpre2021mkqa}
Shayne Longpre, Yi~Lu, and Joachim Daiber.
\newblock {MKQA}: A linguistically diverse benchmark for multilingual open domain question answering.
\newblock \emph{Transactions of the Association for Computational Linguistics}, 9:\penalty0 1389--1406, 2021.
\newblock \doi{10.1162/tacl_a_00433}.
\newblock URL \url{https://aclanthology.org/2021.tacl-1.82/}.

\bibitem[Lu et~al.(2025)Lu, Holleis, Zhang, Aumayer, Nan, Bai, Ma, Ma, Li, Yin, Wang, and Pang]{lu2025toolsandbox}
Jiarui Lu, Thomas Holleis, Yizhe Zhang, Bernhard Aumayer, Feng Nan, Haoping Bai, Shuang Ma, Shen Ma, Mengyu Li, Guoli Yin, Zirui Wang, and Ruoming Pang.
\newblock {ToolSandbox}: A stateful, conversational, interactive evaluation benchmark for {LLM} tool use capabilities.
\newblock In \emph{Findings of the Association for Computational Linguistics: NAACL 2025}, pages 1160--1183. Association for Computational Linguistics, 2025.
\newblock \doi{10.18653/v1/2025.findings-naacl.65}.
\newblock URL \url{https://aclanthology.org/2025.findings-naacl.65/}.

\bibitem[Oliehoek and Amato(2016)]{oliehoek2016decpomdp}
Frans~A. Oliehoek and Christopher Amato.
\newblock \emph{A Concise Introduction to Decentralized {POMDPs}}.
\newblock SpringerBriefs in Intelligent Systems. Springer, 2016.
\newblock \doi{10.1007/978-3-319-28929-8}.

\bibitem[Papoudakis et~al.(2021)Papoudakis, Christianos, and Albrecht]{papoudakis2021liam}
Georgios Papoudakis, Filippos Christianos, and Stefano~V. Albrecht.
\newblock Agent modelling under partial observability for deep reinforcement learning.
\newblock In \emph{Advances in Neural Information Processing Systems}, volume~34, pages 19210--19222, 2021.

\bibitem[Park et~al.(2025)Park, Han, Guo, Ozdaglar, Zhang, and Kim]{park2025maporl}
Chanwoo Park, Seungju Han, Xingzhi Guo, Asuman Ozdaglar, Kaiqing Zhang, and Joo-Kyung Kim.
\newblock {MAPoRL}: Multi-agent post-co-training for collaborative large language models with reinforcement learning.
\newblock \emph{arXiv preprint arXiv:2502.18439}, 2025.
\newblock \doi{10.48550/arXiv.2502.18439}.
\newblock URL \url{https://arxiv.org/abs/2502.18439}.

\bibitem[Patil et~al.(2025)Patil, Mao, Yan, Ji, Suresh, Stoica, and Gonzalez]{patil2025bfcl}
Shishir~G. Patil, Huanzhi Mao, Fanjia Yan, Charlie Cheng-Jie Ji, Vishnu Suresh, Ion Stoica, and Joseph~E. Gonzalez.
\newblock The berkeley function calling leaderboard ({BFCL}): From tool use to agentic evaluation of large language models.
\newblock In \emph{Proceedings of the 42nd International Conference on Machine Learning}, volume 267 of \emph{Proceedings of Machine Learning Research}, pages 48371--48392. PMLR, 2025.
\newblock URL \url{https://proceedings.mlr.press/v267/patil25a.html}.

\bibitem[Qian et~al.(2024)Qian, Liu, Liu, Chen, Dang, Li, Yang, Chen, Su, Cong, Xu, Li, Liu, and Sun]{qian2024chatdev}
Chen Qian, Wei Liu, Hongzhang Liu, Nuo Chen, Yufan Dang, Jiahao Li, Cheng Yang, Weize Chen, Yusheng Su, Xin Cong, Juyuan Xu, Dahai Li, Zhiyuan Liu, and Maosong Sun.
\newblock {ChatDev}: Communicative agents for software development.
\newblock In \emph{Proceedings of the 62nd Annual Meeting of the Association for Computational Linguistics (Volume 1: Long Papers)}, pages 15174--15186, Bangkok, Thailand, 2024. Association for Computational Linguistics.
\newblock \doi{10.18653/v1/2024.acl-long.810}.
\newblock URL \url{https://aclanthology.org/2024.acl-long.810/}.

\bibitem[{Qwen Team}(2025)]{qwen3modelcard}
{Qwen Team}.
\newblock {Qwen3-4B-Instruct-2507}.
\newblock Hugging Face model card, 2025.
\newblock URL \url{https://huggingface.co/Qwen/Qwen3-4B-Instruct-2507}.

\bibitem[Rabinowitz et~al.(2018)Rabinowitz, Perbet, Song, Zhang, Eslami, and Botvinick]{rabinowitz2018tomnet}
Neil Rabinowitz, Frank Perbet, H.~Francis Song, Chiyuan Zhang, S.~M.~Ali Eslami, and Matthew Botvinick.
\newblock Machine theory of mind.
\newblock In \emph{Proceedings of the 35th International Conference on Machine Learning}, volume~80 of \emph{Proceedings of Machine Learning Research}, pages 4218--4227. PMLR, 2018.

\bibitem[Raileanu et~al.(2018)Raileanu, Denton, Szlam, and Fergus]{raileanu2018som}
Roberta Raileanu, Emily Denton, Arthur Szlam, and Rob Fergus.
\newblock Modeling others using oneself in multi-agent reinforcement learning.
\newblock In \emph{Proceedings of the 35th International Conference on Machine Learning}, volume~80 of \emph{Proceedings of Machine Learning Research}, pages 4257--4266. PMLR, 2018.

\bibitem[Sukhbaatar et~al.(2016)Sukhbaatar, Szlam, and Fergus]{sukhbaatar2016commnet}
Sainbayar Sukhbaatar, Arthur Szlam, and Rob Fergus.
\newblock Learning multiagent communication with backpropagation.
\newblock In \emph{Advances in Neural Information Processing Systems}, volume~29, 2016.

\bibitem[Williams(1992)]{williams1992reinforce}
Ronald~J. Williams.
\newblock Simple statistical gradient-following algorithms for connectionist reinforcement learning.
\newblock \emph{Machine Learning}, 8:\penalty0 229--256, 1992.
\newblock \doi{10.1007/BF00992696}.

\bibitem[Wu et~al.(2023)Wu, Bansal, Zhang, Wu, Li, Zhu, Jiang, Zhang, Zhang, Liu, Awadallah, White, Burger, and Wang]{wu2023autogen}
Qingyun Wu, Gagan Bansal, Jieyu Zhang, Yiran Wu, Beibin Li, Erkang Zhu, Li~Jiang, Xiaoyun Zhang, Shaokun Zhang, Jiale Liu, Ahmed~Hassan Awadallah, Ryen~W. White, Doug Burger, and Chi Wang.
\newblock {AutoGen}: Enabling next-gen {LLM} applications via multi-agent conversation.
\newblock \emph{arXiv preprint arXiv:2308.08155}, 2023.
\newblock \doi{10.48550/arXiv.2308.08155}.
\newblock URL \url{https://arxiv.org/abs/2308.08155}.

\bibitem[Yang et~al.(2025)Yang, Li, Yang, Zhang, Hui, Zheng, Yu, Gao, Huang, Lv, et~al.]{qwen3technicalreport}
An~Yang, Anfeng Li, Baosong Yang, Beichen Zhang, Binyuan Hui, Bo~Zheng, Bowen Yu, Chang Gao, Chengen Huang, Chenxu Lv, et~al.
\newblock {Qwen3} technical report.
\newblock \emph{arXiv preprint arXiv:2505.09388}, 2025.
\newblock \doi{10.48550/arXiv.2505.09388}.
\newblock URL \url{https://arxiv.org/abs/2505.09388}.

\bibitem[Yao et~al.(2022)Yao, Zhao, Yu, Du, Shafran, Narasimhan, and Cao]{yao2022react}
Shunyu Yao, Jeffrey Zhao, Dian Yu, Nan Du, Izhak Shafran, Karthik Narasimhan, and Yuan Cao.
\newblock React: Synergizing reasoning and acting in language models.
\newblock \emph{arXiv preprint arXiv:2210.03629}, 2022.

\end{thebibliography}

\newpage
\appendix

\section{Experimental Settings}

\subsection{Tasks and Datasets}

We evaluate on six decentralized, two-agent collaboration tasks. Minecraft tasks are based on the game Minecraft, where agents interact through structured actions on the map of the game; Web tasks use datasets. Agents observe different information sources; Tool tasks use deterministic stateful environments (a simulated phone assistant and an in-memory filesystem). Agents operate through structured tool calls.

\begin{table}[htbp]
\centering
\caption{Dataset splits and primary metrics.}
\label{tab:appendix-datasets}
\footnotesize
\setlength{\tabcolsep}{3pt}
\begin{tabular}{lccc}
\toprule
\textbf{Task} & \textbf{Split} & \textbf{Horizon} & \textbf{Metric} \\
\midrule
Resource Gathering   & $7/7$    & $4$ & success \\
Bridge Building      & $6/3$    & $4$ & success \\
Cross-Lingual        & $120/30$ & $2$ & match score \\
Multi-Source         & $30/15$  & $3$ & match score \\
Stateful Tool        & $25/15$  & $4$ & success \\
Filesystem Wrangling & $22/13$  & $4$ & success \\
\bottomrule
\end{tabular}
\end{table}

\paragraph{Resource Gathering.}
The map of the dataset sizes include $8\times8$, $9\times9$, and $10\times10$. Each cell may contain a tuple of (wood, stone, iron) resources. The task goal is gathering enough required instances. Each turn, an agent's decision is limited to a movement path and at most $2$ extraction targets. The view radius is $2$ and the maximum path length is $4$. Extraction succeeds only when the target cell lies within Manhattan distance $2$ of the agent's final position. The two agents have asymmetric tools: one can harvest wood while the other can harvest stone and iron. Thus, communication is necessary, as an agent may observe resources that are useful to its teammate. Success is defined as achieving the task goal.

\paragraph{Bridge Building.}
All maps of dataset are $13\times11$ grids. The agent view radius is $3$. The task goal is to connects the two anchor regions. The maps differ in things like pillar layout and positions of anchor. Each map contains candidate pillars, including true supports and fake hazardous supports. Agents should discover useful supports through local observation and probing, or communicate probe results and coordinate construction. Each response may probe up to $3$ candidate pillars and issue bridge-construction commands within a command budget. Success is defined as achieving the task goal.

\paragraph{Cross-Lingual Gathering.}
This task is built from a Chinese MKQA subset~\citep{longpre2021mkqa}. Each instance contains paired English and Chinese retrieval corpora and a two-piece support chain. One agent searches English Wikipedia passages, then identify an English page title or bridge clue. The other searches Chinese passages. It is responsible for producing the final answer from target-language evidence. The retriever returns the top $4$ passages, each prompt shows up to $6$ visible pieces, and communication can include up to $3$ evidence items. Success is defined as finding the final answer.

\paragraph{Multi-Source Gathering.}
This task uses HybridQA examples~\citep{chen2020hybridqa}, with training instances drawn from the train split and evaluation instances from the development split. The task is balanced across passage-to-table, table-to-passage, and passage-to-table-to-passage reasoning chains. One agent can inspect passage evidence while the other can inspect table-row evidence. Each instance contains a table, a set of passage pieces, and cross-source support annotations. The final answer often depends on combining one table row with one or more passage pieces. The retriever returns the top $4$ pieces, communication is limited to $3$ evidence item. Success is defined as finding the final answer.

\paragraph{Stateful Tool.}
This task is adapted from Apple ToolSandbox~\citep{lu2025toolsandbox} scenarios. The environment simulates a phone holding hidden state (contacts, messages, reminders, device settings, timestamps). The goal is a natural-language user goal (e.g., checking the latest message, turning off cellular data, or rescheduling a reminder to a specific time). The State Inspector can only retrieve information; the Tool Executor can only modify settings/contacts/reminders, send messages, and submit the final answer. Execution tools require the relevant fact to have been known by the Executor before the current turn. Since only the Inspector can retrieve such facts, the team should hand them off across turns in sequence. Task types include setting toggles, message/reminder lookups, sending messages, contact phone/relationship updates, contact removal, and reminder add/modify/remove. Success is defined as the target action being achieved in the world state.

\paragraph{Filesystem Wrangling.}
This task is a faithful port of the multi-turn \texttt{GorillaFileSystem} tasks in BFCL V3~\citep{patil2025bfcl}. The filesystem is a deterministic in-memory state machine. It can be viewed as a mini MCP: a shared in-memory file server accessed by two scope-restricted clients. The Explorer has read-only tools (\texttt{ls}, \texttt{find}, \texttt{cat}, \texttt{grep}, etc.), while the Editor has write tools (\texttt{mkdir}, \texttt{touch}, \texttt{echo}, \texttt{mv}, \texttt{cp}, \texttt{rm}, etc.). Since the editor cannot list or search the directory tree, target paths can only be obtained from the Explorer's communication. Irreversible operations (\texttt{mv}, \texttt{cp}, \texttt{rm}, \texttt{rmdir}) succeed only if the source path was learned by the Editor in a previous turn. Success is defined by a final-state file tree check, or answer matching for the report family.

\subsection{Structured Context and Auxiliary Targets}

For BCMAAC, each environment exposes compact structured side information $x_{i,t}$ constructed only from information visible to agent $i$ at execution time. It never exposes the global answer nor hidden training labels.

The belief targets vary across tasks. In Resource Gathering, they mark the resource cells that match the agent's tool ability. In Bridge Building, they mark the true supports among pillars. In the Web tasks, they mark the evidence pieces. In the Tool tasks, they mark the required progress slots, such as whether the target action is done.

The partner-action signal is teammate action-type prediction, implemented with a four-class taxonomy in every task. Bridge Building uses place/scout/communicate/no-op. Resource Gathering uses gather/move/communicate/no-op.

Auxiliary labels are used only during training. Notice that at execution time agents keep only their actor and recurrent belief encoder.

\subsection{Training Details}

All base model of agents are initialized from \texttt{Qwen/Qwen3-4B-Instruct-2507}~\citep{qwen3technicalreport, qwen3modelcard}. We train in bfloat16 and sample actions with temperature $0.5$ and top-$p=0.6$. Within each task, MAAC and BCMAAC use the same environments, reward definitions, rollout budgets, and evaluation protocols; the actor and critic learning rates are matched across methods, and BCMAAC additionally trains the belief encoder with a separate context learning rate. The belief encoder uses a hidden dimension of $128$ and the structured value head a hidden dimension of $256$. Table~\ref{tab:appendix-hparams} lists the main training hyperparameters.

\begin{table}[t]
\centering
\caption{Main training hyperparameters. The context learning rate applies to BCMAAC variants.}
\label{tab:appendix-hparams}
\footnotesize
\setlength{\tabcolsep}{3pt}
\begin{tabular}{p{0.22\textwidth}p{0.70\textwidth}}
\toprule
\textbf{Task} & \textbf{Training hyperparameters} \\
\midrule
Resource Gathering   & $180$ epochs; rollout/batch $8/4$; $\gamma=0.90$; actor/critic LR $2.5{\times}10^{-6}$; context LR $3.0{\times}10^{-5}$; max tokens $160$. \\
Bridge Building      & $180$ epochs; rollout/batch $8/4$; $\gamma=0.90$; actor/critic LR $2.5{\times}10^{-6}$; context LR $3.0{\times}10^{-5}$; max tokens $160$. \\
Cross-Lingual        & $30$ epochs; rollout/batch $16/4$; $\gamma=0.95$; actor/critic LR $2.0{\times}10^{-6}$; context LR $1.0{\times}10^{-5}$; max tokens $192$. \\
Multi-Source         & $30$ epochs; rollout/batch $12/2$ or $16/4$; $\gamma=0.95$; actor/critic LR $1.0{\times}10^{-6}$; context LR $1.0{\times}10^{-4}$; max tokens $192$. \\
Stateful Tool        & $120$ epochs; rollout/batch $10/2$; actor/critic LR $1.5{\times}10^{-6}$; context LR $5.0{\times}10^{-5}$; max tokens $160$. \\
Filesystem Wrangling & $120$ epochs; rollout/batch $10/2$; actor/critic LR $1.5{\times}10^{-6}$; context LR $3.0{\times}10^{-5}$; max tokens $160$. \\
\bottomrule
\end{tabular}
\end{table}

\subsection{Reward Designs}

All rewards are computed by deterministic environment rules after parsing the agents' JSON actions. We use dense shaping only to expose intermediate progress; the main evaluation metrics remain the task-native success or answer-match scores.

\paragraph{Resource Gathering.}
Let $p_t$ be normalized collection progress, $c_t$ indicate whether the resource goal is completed, $u_t$ count useful resource communications, $z_t$ and $e_t$ denote move-to-zone and first-zone-entry indicators, and $w_t$ be the number of wasted extraction attempts. The shaped reward is
\begin{equation*}
r^{\mathrm{RG}}_t = 8.0\,\Delta p_t + 7.0\,c_t + 0.1\,u_t + 0.05\,z_t + 0.15\,e_t - 0.1\,w_t .
\end{equation*}

\paragraph{Bridge Building.}
Let $g_t$ be the normalized remaining connection gap, $m_t$ the normalized number of newly merged connected components, $y_t$ the normalized newly connected true-support signal, $n_t$ the normalized number of newly unsafe adjacencies to fake supports, and $b_t$ the normalized number of newly placed blocks; $m_t$, $y_t$, and $n_t$ are normalized by the corresponding per-map totals. The bridge reward is
\begin{equation*}
\begin{aligned}
r^{\mathrm{BB}}_t =&\;5.0(g_{t-1}-g_t)+1.5\,m_t+5.0\,y_t+2.0\,c_t \\
&\;+2.5\,\Delta d_t-12.0\,n_t-5.0\,b_t+q^{\mathrm{late}}_t ,
\end{aligned}
\end{equation*}
where $c_t$ indicates terminal connectivity, $\Delta d_t$ is movement progress toward the opposite side, and $q^{\mathrm{late}}_t$ is a small late-turn quality term favoring cleaner, more compact, and closer-to-finish bridges.

\paragraph{Web Gathering.}
Both Web tasks share the same reward template. Let $a_t$ be the answer-match score, $s_t$ support coverage, $u_t$ useful communication, $x_t$ cross-source communication, $h_t$ informed search, $k_t$ support-chain completion, $b_t$ shared-board completion, $e_t$ answer agreement, and $f_t$ a fully supported answer indicator that fires when a correct answer is submitted with a complete support board. The reward is
\begin{equation*}
\begin{aligned}
r^{\mathrm{Web}}_t =&\;0.2\,\Delta s_t+2.0\,a_t+0.2\,u_t+0.03\,x_t+0.3\,h_t \\
&\;+0.6\,k_t+0.25\,b_t+0.4\,e_t+0.6\,f_t .
\end{aligned}
\end{equation*}
Substring answer matches receive partial credit. Explicit penalties for wrong, premature, conflicting, or unsupported answers are set to zero in the reported runs, relying instead on support and agreement rewards to shape answer readiness.

\paragraph{Stateful Tool.}
Let $\Delta p_t$ be the progress increment (fraction of required progress slots satisfied), $c_t$ indicate terminal success, $u_t,g_t$ useful/grounded communication, $v_t$ the number of valid tool calls, $\mathrm{inv}_t,\mathrm{err}_t$ the numbers of invalid actions / tool errors, $\mathrm{pre}_t$ premature answering, $\rho_t$ the ``ready-but-not-executed/answered'' penalty, $\mathrm{fail}_t$ terminal failure, and $\mathrm{noop}_t$ the number of no-ops. The reward is
\begin{equation*}
\begin{aligned}
r^{\mathrm{ST}}_t =&\;0.8\,\Delta p_t + 4.0\,c_t + 0.08\,u_t + 0.01\,g_t + 0.03\,v_t \\
&\;- 0.08\,\mathrm{inv}_t - 0.12\,\mathrm{err}_t - 0.15\,\mathrm{pre}_t \\
&\;- 0.75\,\rho_t - 0.25\,\mathrm{fail}_t - 0.02\,\mathrm{noop}_t .
\end{aligned}
\end{equation*}

\paragraph{Filesystem Wrangling.}
Using the same notation, with $\mathrm{derr}_t$ the number of irreversible (destructive) operation errors, the reward is
\begin{equation*}
\begin{aligned}
r^{\mathrm{FW}}_t =&\;1.3\,\Delta p_t + 3.3\,c_t + 0.18\,u_t + 0.05\,g_t + 0.06\,v_t \\
&\;- 0.05\,\mathrm{inv}_t - 0.06\,\mathrm{err}_t - 0.12\,\mathrm{derr}_t - 0.10\,\mathrm{pre}_t \\
&\;- 0.45\,\rho_t - 0.15\,\mathrm{fail}_t - 0.01\,\mathrm{noop}_t .
\end{aligned}
\end{equation*}
The largest penalty $\rho_t$ ($0.45$) fires only on terminal failure when all non-terminal prerequisite slots are satisfied but the irreversible action was never executed.

\subsection{Prompt Details}

All environments require agents to return exactly one JSON object, which makes the parser deterministic. And it helps convert natural-language decisions into structured actions. We show abbreviated prompt skeletons below.

\paragraph{Minecraft prompts.}
\begin{lstlisting}
Task setup:
- Current turn: {turn_idx}/{max_turns}
- Goal / progress: {goal_or_connection_state}
- Local state: {position, visible_objects}
- Received teammate messages: {received_messages}

Return exactly one JSON object:
{
  "comm": { ... structured facts ... },
  "probe": [[x,z], ...],
  "cmds": [... extraction targets or bridge commands ...],
  "path": [[x,z], ...]
}
\end{lstlisting}

\paragraph{Web prompts.}
\begin{lstlisting}
Task setup:
- Question: {question}
- Your source: {English / Chinese / passage / table}
- Search history: {search_history}
- Visible evidence pieces: {visible_pieces}
- Shared evidence board: {shared_board}

Return exactly one JSON object:
{
  "comm": {"evidence_ids": ["piece_id"], "notes": "short grounded clue"},
  "search": "query string",
  "answer": ""
}
\end{lstlisting}

\paragraph{Tool prompts.}
\begin{lstlisting}
Task setup:
- Role: {State Inspector / Tool Executor  |  Explorer / Editor}
- Current turn: {turn_idx}/{max_turns}
- User goal: {user_goal}
- Allowed tools: {allowed_tools}
- Known facts (self): {known_facts}
- Recent tool results: {tool_results}
- Received teammate messages: {received_messages}

Return exactly one JSON object:
{
  "comm": {"facts": { ... }, "request": ["needed fact"],
           "intent": "short next-step intent", "ready": false},
  "tool": {"name": "tool_name", "args": { ... }},
  "answer": ""
}
\end{lstlisting}

\subsection{Evaluation Protocol and Metrics}
\label{app:eval-protocol}

In the main results we report task-native metrics: success for Minecraft and Tool, and match score for Web. We report the mean and standard deviation across $5$ independent training runs, where each run's score is averaged over several late-training evaluation checkpoints.

\label{app:eval-metrics}

\textbf{Resource Gathering Success.}
Let $\mathcal{R}$ be the set of resource types and $C_r,G_r$ the final collected and required counts. An episode succeeds iff all resource targets are met:
\[
\mathbf{1}_{\mathrm{RG}}=\mathbf{1}\!\left[\forall r\in\mathcal{R},\; C_r\ge G_r\right],
\qquad
\text{score}=100\cdot \tfrac{1}{N}\sum_{j=1}^{N}\mathbf{1}_{\mathrm{RG}}^{(j)} .
\]

\textbf{Bridge Building Success.}
Let $\mathcal{A}_S,\mathcal{A}_T$ be the two anchor regions and $\mathcal{F}$ the final traversable set (anchors, pillars, placed bridge blocks). If there is a finite path $(p_0,\ldots,p_L)$ with
\[
\begin{gathered}
p_0\in\mathcal{A}_S,\quad p_L\in\mathcal{A}_T,\\
p_\ell\in\mathcal{F}\;(\ell=0,\ldots,L),\quad \|p_{\ell+1}-p_\ell\|_1=1\;(\ell<L),
\end{gathered}
\]
then $\mathrm{Conn}_{\mathcal{F}}(\mathcal{A}_S,\mathcal{A}_T)$ holds (four-neighbor connectivity). The success indicator is $\mathbf{1}_{\mathrm{BB}}=\mathbf{1}[\mathrm{Conn}_{\mathcal{F}}(\mathcal{A}_S,\mathcal{A}_T)]$ and the score is $100\cdot\tfrac1N\sum_j\mathbf{1}_{\mathrm{BB}}^{(j)}$.

\textbf{Cross-Lingual Gathering Match Score.}
Let $n(\cdot)$ denote answer normalization (lowercasing and removal of superficial formatting differences) and $\mathcal{G}$ the gold answer and its annotated aliases. For a submitted answer $\hat y$ and a gold answer or alias $y$,
\[
s_{\mathrm{CL}}(\hat y,y)=
\begin{cases}
1, & n(\hat y)=n(y),\\
\alpha, & \text{one normalized answer contains the other},\\
0, & \text{otherwise},
\end{cases}
\]
where $\alpha=0.7$ is the substring-match credit. Let $\widehat{\mathcal{Y}}_t$ be the set of non-empty answers submitted at turn $t$; if none is submitted then $m^{\mathrm{CL}}_t=0$, otherwise
\[
m^{\mathrm{CL}}_t=\kappa_t\max_{\hat y\in\widehat{\mathcal{Y}}_t} q_{\mathrm{CL}}(\hat y),
\quad
q_{\mathrm{CL}}(\hat y)=\max_{y\in\mathcal{G}} s_{\mathrm{CL}}(\hat y,y),
\]
with $\kappa_t=1$ unless multiple submitted answers conflict, in which case $\kappa_t$ is a configured down-weighting factor. The episode score is $M_{\mathrm{CL}}=\sum_{t=0}^{H-1}m^{\mathrm{CL}}_t$; since submitting an answer terminates the episode, at most one term is nonzero during evaluation. The reported score is $100\cdot\tfrac1N\sum_j M_{\mathrm{CL}}^{(j)}$.

\textbf{Multi-Source Gathering Match Score.}
This uses the same per-turn accumulation protocol but matches against the task's final annotated answer $y^\star$:
\[
s_{\mathrm{MS}}(\hat y,y^\star)=
\begin{cases}
1, & n(\hat y)=n(y^\star),\\
\alpha, & n(\hat y)\ \text{contains}\ n(y^\star),\\
0, & \text{otherwise},
\end{cases}
\]
with $\alpha=0.7$ and the same down-weighting rule, $M_{\mathrm{MS}}=\sum_t m^{\mathrm{MS}}_t$, and reported score $100\cdot\tfrac1N\sum_j M_{\mathrm{MS}}^{(j)}$.

\textbf{Stateful Tool / Filesystem Wrangling Success.}
Tool tasks use a state-verified binary success. Let $\mathrm{done}_t$ indicate whether the target action is achieved in the world state after turn $t$. An episode succeeds if any turn achieves the target:
\[
\mathbf{1}_{\mathrm{Tool}}=\mathbf{1}\!\left[\exists\,t\le H-1:\ \mathrm{done}_t\right],
\qquad
\text{score}=100\cdot\tfrac1N\sum_{j=1}^{N}\mathbf{1}_{\mathrm{Tool}}^{(j)} .
\]

\subsection{Baseline Details}
\label{app:baseline-details}

All methods are evaluated under the same task instances, environment transitions, reward definitions, parsing rules, and task-native metrics. The differences are the training signal, whether multiple agents are used, and whether the actor/critic receives belief context. Table~\ref{tab:appendix-baseline-details} summarizes the main comparisons. The prompt-only baselines use the same action format as the trained agents; MAAC and BCMAAC use the same rollout budgets, base LLM initialization, actor/critic optimization settings, and environment rewards; the ablations change only the auxiliary supervision attached to the belief encoder.

\begin{table}[t]
\centering
\caption{Detailed definitions of baselines and ablation variants.}
\label{tab:appendix-baseline-details}
\footnotesize
\setlength{\tabcolsep}{4pt}
\begin{tabular}{p{0.13\textwidth}p{0.22\textwidth}p{0.22\textwidth}p{0.33\textwidth}}
\toprule
\textbf{Method} & \textbf{Execution} & \textbf{Training signal} & \textbf{Notes} \\
\midrule
Raw Model & Decentralized LLM agents without RL updates. & None. & Uses the same prompts and parsers as the trained methods; measures the capability of the initial instruction model. \\
AC & Single-agent actor-critic. & Return regression and policy-gradient updates from the task reward. & Tests how far task-specific RL improves a single actor without learned multi-agent coordination. \\
Parallel & Prompt-only multi-agent parallel execution. & None. & Agents act in parallel from their own prompts and received messages; no policy update or centralized critic. \\
Pipeline & Prompt-only serial execution. & None. & Agents act in a fixed order, so later agents can condition on earlier outputs; the collaboration pattern is manually specified. \\
MAAC & Decentralized actors with centralized training. & Actor-critic updates with a centralized critic over joint histories. & The closest RL baseline; actors execute from local histories while the critic estimates team return during training. \\
BCMAAC-P & BCMAAC without task-belief supervision. & Actor-critic updates plus partner-action / action-preference auxiliary supervision. & Isolates the effect of modeling teammate behavior / action preferences. \\
BCMAAC-B & BCMAAC without partner-action supervision. & Actor-critic updates plus task-belief supervision. & Isolates the effect of grounding the context in task-state variables; in Web/Tool tasks answer-readiness is grouped with task belief. \\
BCMAAC & Belief-conditioned decentralized actors with a structured centralized critic. & Actor-critic updates plus task-belief and partner-action auxiliary signals. & The full method used for the main comparison. \\
\bottomrule
\end{tabular}
\end{table}

\begin{table}[t]
\centering
\caption{Structured side information and task-belief targets used by BCMAAC.}
\label{tab:appendix-side-info}
\footnotesize
\setlength{\tabcolsep}{4pt}
\begin{tabular}{p{0.18\textwidth}p{0.27\textwidth}p{0.16\textwidth}p{0.31\textwidth}}
\toprule
\textbf{Task} & \textbf{Structured grid} & \textbf{Scalars} & \textbf{Task-belief target} \\
\midrule
Resource Gathering   & $10$ spatial channels over the map. & $11$ normalized scalar features. & Non-empty resource cells relevant to the agent's tool role. \\
Bridge Building      & $12$ spatial channels over the map. & $8$ normalized scalar features. & True support pillars among candidate pillars. \\
Cross-Lingual Gathering & $8$ piece channels over the evidence list. & $20$ normalized scalar features. & Evidence pieces in the annotated support chain. \\
Multi-Source Gathering  & $8$ piece channels over table and passage pieces. & $20$ normalized scalar features. & Evidence pieces in the annotated cross-source support chain. \\
Stateful Tool        & $8$ channels over $6$ belief slots. & $16$ normalized scalars. & Required progress slots (whether the executor holds the key fact, whether the setting is ready, whether the target action is done, etc.). \\
Filesystem Wrangling & $8$ channels over $6$ belief slots. & $16$ normalized scalars. & Required progress slots (whether the target is located, whether the executor knows the path, whether the destination is ready, whether the target action is done, etc.). \\
\bottomrule
\end{tabular}
\end{table}

\subsection{Structured Side Information}
\label{app:structured-side-info}

Recall that the side information $x_{i,t}$ is built only from information available to agent $i$ at execution time. The following examples are schematic; the implementation converts these fields into grid channels and scalar vectors.

\begin{lstlisting}
Resource Gathering:
{ "grid": {visibility, visible_wood/stone/iron, msg_wood/stone/iron,
           self_pos, teammate_pos, extraction_reach},
  "scalars": {turn_ratio, normalized_position, goal[w,s,i],
              collected_ratio[w,s,i], inbox_load, completed} }
\end{lstlisting}
\begin{lstlisting}
Bridge Building:
{ "grid": {visibility, visible_land, visible_filled, visible_candidate,
           visible_anchor_s/t, known_y/n, msg_y/n/candidate, self_pos},
  "scalars": {turn_ratio, normalized_position, known_safe_count,
              known_unsafe_count, message_count, connected, terminated} }
\end{lstlisting}
\begin{lstlisting}
Cross-Lingual / Multi-Source (piece grid):
{ "piece_grid": {seen_by_self, on_shared_board, in_last_results, in_inbox,
                 question_overlap, hit_count, public_hint_overlap, own_source},
  "scalars": {turn_ratio, remaining_turn_ratio, seen_budget,
              shared_board_budget, inbox_backlog / unresolved_backlog,
              search_depth / recent_board_growth, query_repeat_ratio /
              local_unshared_visible_ratio, board_has_both_sources, ...} }
\end{lstlisting}
\begin{lstlisting}
Stateful Tool / Filesystem Wrangling (slot grid):
{ "grid": {self_knows_slot, inbox_mentions_slot, last_tool_relates_to_slot,
           slot_in_read_set, slot_in_write_set, slot_required,
           success_flag, turn_fraction},   # 8 channels over belief slots
  "scalars": {turn_ratio, role_onehot, per_fact_known_flags,
              inbox_size, tool_history_len, success, ...} }  # 16-dim
\end{lstlisting}

Table~\ref{tab:appendix-side-info} summarizes the tensor dimensions and task-belief targets.

\subsection{Auxiliary Label Generation}
\label{app:aux-label-generation}

The labeling rule $\Gamma_{\xi}(\tau)$ produces all auxiliary labels after each rollout, reading the completed trajectory together with the environment's internal state. The task-belief loss is a masked binary cross-entropy,
\[
\mathcal L_b=\mathrm{BCEWithLogits}\!\left(\hat{\mathbf b}_{i,t}[\boldsymbol\omega_{i,t}],\ \mathbf b^*_{i,t}[\boldsymbol\omega_{i,t}]\right),
\]
where $\mathbf b^*_{i,t}$ marks the ground-truth entries listed in Table~\ref{tab:appendix-side-info}, and the mask $\boldsymbol\omega_{i,t}$ restricts supervision to unresolved or required entries. For the Web and Tool tasks, an answer-readiness label is grouped with task belief: the target is $1$ at the first successful turn and $0$ otherwise. The partner-action loss is a cross-entropy over a task-specific four-class action taxonomy,
\[
\mathcal{L}_p = \mathrm{CE}(\hat{\mathbf{p}}_{i,t},\,a^*_{-i,t}),
\]
where $a^*_{-i,t}$ is the parsed teammate action type; Minecraft tasks add action-preference supervision over the \texttt{comm}, \texttt{probe}, \texttt{cmds}, and \texttt{path} fields. The full auxiliary objective is
\[
\mathcal L_{\mathrm{aux}}=\lambda_b\mathcal L_b+\lambda_p\mathcal L_p+\lambda_r\mathcal L_r+\lambda_a\mathcal L_a,
\]
where $\mathcal L_r$ is the answer-readiness loss and $\mathcal L_a$ the action-preference loss; in the main text, $\mathcal L_r$ is grouped into the task-belief term and $\mathcal L_a$ into the partner-action term. BCMAAC-B sets $\lambda_p$ and $\lambda_a$ to zero; BCMAAC-P sets $\lambda_b$ and $\lambda_r$ to zero.

\section{Proofs}
\label{app:proofs}

\begin{proof}[Proof of Proposition~\ref{prop:sufficiency}]
Because $\mathbf c_t$ is determined by information in $\mathcal{F}_t$, conditioning on $\mathcal{F}_t$ also determines $\mathbf c_t$. Hence
\begin{align*}
V^{\boldsymbol{\pi}}(\mathcal{F}_t)
&=\mathbb{E}_{\boldsymbol{\pi}}[G_t\mid \mathcal{F}_t]\nonumber\\
&=\mathbb{E}_{\boldsymbol{\pi}}[G_t\mid \mathbf c_t]
=V^{\boldsymbol{\pi}}(\mathbf c_t).
\end{align*}
\end{proof}

\end{document}